%
\RequirePackage{amsmath}
\documentclass[runningheads]{llncs}
\usepackage{graphicx}
%
\usepackage{hyperref}

\usepackage{amssymb}
\usepackage{mathtools}
\usepackage{bbm}

\usepackage{algorithm}
\usepackage[noend]{algpseudocode}
\algnewcommand{\IfThen}[2]{\State \algorithmicif\ #1\ \algorithmicthen\ #2}
\algnewcommand\algorithmicforeach{\textbf{for each}}
\algdef{S}[FOR]{ForEach}[1]{\algorithmicforeach\ #1\ \algorithmicdo}

\usepackage{pgf}
\usepackage{tikz}
\usetikzlibrary{arrows,automata}



\newcommand{\bm}[1]{\mathbf{#1}}  
\newcommand{\bv}[1]{\mathbf{#1}}  
\newcommand{\computes}[1]{\rightarrow_{#1}}

\DeclarePairedDelimiter\abs{\lvert}{\rvert}%
\DeclarePairedDelimiter\norm{\lVert}{\rVert}%

\makeatletter
\let\oldabs\abs
\def\abs{\@ifstar{\oldabs}{\oldabs*}}
\let\oldnorm\norm
\def\norm{\@ifstar{\oldnorm}{\oldnorm*}}
\makeatother

\newcommand{\CL}{\mathrm{CL}}
\newcommand{\SCL}{\mathrm{SCL}}

\newcommand{\ICL}{\mathrm{ICL}}
\newcommand{\nQCL}{\mathrm{\tilde QCL}}
\newcommand{\nQSCL}{\mathrm{\tilde QSCL}}

\newcommand{\TCL}{\mathrm{\Theta CL}}

\DeclarePairedDelimiter\floor{\lfloor}{\rfloor}

\newcommand{\sgn}{\mathrm{sgn}}
\newcommand{\thresh}{\mathbbm{1}_+}
\newcommand{\tup}[1]{\langle #1 \rangle}

\begin{document}

\title{On the Linguistic Capacity of Real-Time Counter Automata}


\author{William Merrill\\\email{willm@allenai.org}}
%
%
\institute{Allen Institute for AI, Seattle WA 98103}
\maketitle

\begin{abstract}
Counter machines have achieved a newfound relevance to the field of natural language processing (NLP): recent work suggests some strong-performing recurrent neural networks utilize their memory as counters. Thus, one potential way to understand the success of these networks is to revisit the theory of counter computation. Therefore, we study the abilities of real-time counter machines as formal grammars, focusing on formal properties that are relevant for NLP models. We first show that several variants of the counter machine converge to express the same class of formal languages. We also prove that counter languages are closed under complement, union, intersection, and many other common set operations. Next, we show that counter machines cannot evaluate boolean expressions, even though they can weakly validate their syntax. This has implications for the interpretability and evaluation of neural network systems: successfully matching syntactic patterns does not guarantee that counter memory accurately encodes compositional semantics. Finally, we consider whether counter languages are semilinear. This work makes general contributions to the theory of formal languages that are of potential interest for understanding recurrent neural networks.
\end{abstract}
\section{Introduction}


It is often taken for granted that modeling natural language syntax well requires a grammar formalism sensitive to compositional structure. Early work in linguistics established that finite-state models are insufficient for describing the dependencies in natural language data \cite{chomsky1956three}. Instead, a formalism capable of expressing the relations in terms of hierarchical constituents ought to be necessary.

Recent advances in deep learning and NLP, however, challenge this long-held belief. Neural network formalisms like the long short-term memory network (LSTM) \cite{hochreiter-1997-lstm} perform fairly well on tasks requiring structure sensitivity \cite{linzen2016assessing}, even though it is not obvious that they have the capacity or bias to represent hierarchy. This mismatch raises interesting questions for both linguists and practitioners of NLP. It is unclear what about the LSTM architecture might lend itself towards good linguistic representations, and under what conditions these representations might fall short of grasping the structure and meaning of language.

Recent work has suggested that the practical learnable capacity of LSTMs resembles that of counter machines \cite{merrill-2019-sequential}\cite{suzgun-etal-2019-lstm} \cite{weiss2018}. Theoretically, this connection is motivated by studying the ``saturated" version \cite{merrill-2019-sequential} of the LSTM network, i.e. replacing each continuous activation function with a step function. Under these conditions, the LSTM reduces to a discrete automaton that uses its memory cell as integer-valued counter registers. \cite{weiss2018} define a simplified class of counter languages that falls within the expressive capacity of this saturated LSTM model. On the other hand, a more general class of counter languages is an upper bound on the expressive capacity of saturated LSTMs \cite{merrill-2019-sequential}. Thus, there is a strong theoretical connection between LSTMs and counter automata.

Furthermore, these theoretical results for saturated LSTMs seem to predict what classes of formal languages LSTMs can empirically learn. \cite{weiss2018} show how LSTMs learn to model languages like $a^nb^n$ by using their memory to count $n$, whereas other recurrent neural network architectures without saturated counting abilities fail. Similarly, \cite{merrill-2019-sequential} shows how LSTMs cannot reverse strings, just like real-time counter automata \cite{fischer1968counter}. Further, LSTMs can flawlessly model $1$-Dyck strings by using their memory to count \cite{suzgun-etal-2019-lstm}, but, like counter automata, they cannot model $2$-Dyck \cite{suzgun2019memory}. It seems that, where LSTMs succeed at algorithmic tasks, they do so by counting, and where they fail, their failure might be explained by their inability to reliably implement more complex types of memory.

Inspired by the connection of LSTMs to counter automata, we study the formal properties of counter machines as language recognizers. We do this with the hope of understanding the abilities of counter-structured memory, and to what degree it has computational properties well-suited for representing compositional structure. The contributions of this paper are as follows:

\begin{itemize}
    \item We prove that several interesting counter machine variants converge to the same linguistic capacity, whereas simplified counter machines \cite{weiss2018} are strictly weaker than classical counter machines.
    \item We demonstrate that counter languages are closed under complement, union, intersection, and many other common operations.
    \item We show counter machines cannot evaluate compositional boolean expressions, even though they can check whether such expressions are well-formed.
    \item We prove that a certain subclass of the counter languages are semilinear, and conjecture that this result holds for all counter languages.
\end{itemize}
\section{Definitions} \label{sec:counter-machines}

Informally, we can think of counter automata as finite-state automata that have been augmented by a finite number of integer-valued counters. While processing a string, the machine can update the values of the counters, and the counters can in turn inform the machine's state transitions.

Early results in theoretical computer science established that a $2$-counter machine with unbounded computation time is Turing-complete \cite{fischer1966turing}. However, restricting computation to be real-time (i.e. one iteration of computation per input) severely limits the counter machine's computational capacity \cite{fischer1968counter}. A similar fact holds for recurrent neural networks like LSTMs \cite{weiss2018}. We study the language recognition abilities of several types of real-time counter automata.

\subsection{General Counter Machines}

The first counter automaton we introduce is the \textit{general counter machine}. This machine manipulates its counters by adding or subtracting from them. Later, we define other variants of this general automaton. For $m \in \mathbb{Z}$, let ${\pm}m$ denote the function $\lambda x. x \pm m$. Let $\times 0$ denote the constant zero function $\lambda x. 0$.

\begin{definition}[General counter machine \cite{fischer1968counter}]
A $k$-counter machine is a tuple $\langle \Sigma, Q, q_0, u, \delta, F \rangle$ with
\begin{enumerate}
    \item A finite alphabet $\Sigma$
    \item A finite set of states $Q$\footnote{The original definition \cite{fischer1968counter} distinguishes between ``autonomous" and ``polling" states, a distinction that is vacuous in the real-time case we are studying.}
    \item An initial state $q_0$
    \item A counter update function
    \begin{equation*}
        u : \Sigma \times Q \times \{0, 1\}^k \rightarrow
        \big( \{+m : m \in \mathbb{Z} \} \cup \{ \times 0 \} \big)^k
    \end{equation*}
    \item A state transition function
    \begin{equation*}
        \delta : \Sigma \times Q \times \{0, 1\}^k \rightarrow Q
    \end{equation*}
    \item An acceptance mask
    \begin{equation*}
        F \subseteq Q \times \{0, 1\}^k
    \end{equation*}
\end{enumerate}
\end{definition}

A machine processes an input string $x$ one token at a time.
For each token, we use $u$ to update the counters and $\delta$ to update the state according to the current input token, the current state, and a finite mask of the current counter values. We formalize this in \autoref{def:counter-machine-computation}.

For a vector $\bv v$, let $z(\bv v)$ to denote the broadcasted ``zero-check" function, i.e.
\begin{equation}
z(\bv v)_i =
\begin{cases}
0 & \textrm{if} \; v_i = 0 \\
1 & \textrm{otherwise.}
\end{cases}
\end{equation}

\begin{definition}[Counter machine computation] \label{def:counter-machine-computation}
Let $\langle q, \bv c\rangle \in Q \times \mathbb{Z}^k$ be a configuration of machine $M$. Upon reading input $x_t \in \Sigma$, we define the transition
\begin{equation*}
\langle q, \bv c\rangle \computes{x_t} \langle
\delta(x_t, q, z(\bv c)) ,
u(x_t, q, z(\bv c))(\bv c)
\rangle.
\end{equation*}
\end{definition}

\begin{definition}[Real-time acceptance]
For any string $x \in \Sigma^*$ with length $n$, a counter machine accepts $x$ if there exist states $q_1, .., q_n$ and counter configurations $\bv c_1, .., \bv c_n$ such that
\begin{equation*}
    \langle q_0, \bv 0 \rangle \computes{x_1} \langle q_1, \bv c_1 \rangle \computes{x_2} .. \computes{x_n} \langle q_n, \bv c_n \rangle \in F .
\end{equation*}
\end{definition}

\begin{definition}[Real-time language acceptance]
A counter machines accepts a language $L$ if, for each $x \in \Sigma^*$, it accepts $x$ iff $x \in L$.
\end{definition}

We denote the set of languages acceptable in real time by a general counter machine as $\CL$. We will use the terms ``accept" and ``decide" interchangeably, as accepting and deciding a language are equivalent for real-time automata.

\begin{figure}[ht!]
    \centering
    \begin{tikzpicture}[->,>=stealth',shorten >=1pt,auto,node distance=2.8cm, semithick]
    
    \node[initial,state] (0) {$q_0$};
    \node[state] (1) [right of=0] {$q_1$};
    \node[state] (2) [right of=1] {$q_2$};

    \path (0) edge [loop above] node {$a/{+}1$} (0)
              edge [above] node {$b/{-}1$} (1);

    \path (1) edge [loop above] node {$b/{-}1$} (1)
              edge [above] node {$a/{+}0$} (2);
    
    \path (2) edge [loop above] node {$a,b/{+}0$} (2);
    
    \end{tikzpicture}
    \caption{A graphical representation of a $1$-counter machine that accepts $\{a^nb^n \mid n \in \mathbb{N} \}$ if we set $F$ to verify that the counter is $0$ and we are in either $q_0$ or $q_1$.}
    \label{fig:example}
\end{figure}

\begin{figure}[ht!]
    \centering
    \begin{equation*}
        \tup{0, q_0} \computes{a} \tup{1, q_0} \computes{a} \tup{2, q_0} \computes{b} \tup{1, q_1} \computes{b} \tup{0, q_0} \in F
    \end{equation*}
    \begin{equation*}
        \tup{0, q_0} \computes{a} \tup{1, q_0} \computes{a} \tup{2, q_0} \computes{b} \tup{1, q_1} \computes{a} \tup{1, q_2} \notin F
    \end{equation*}
    \caption{Behavior of the counter machine in \autoref{fig:example} on $aabb$ (\textbf{top}) and $aaba$ (\textbf{bottom}).}
    \label{fig:example-computation}
\end{figure}

Unlike context-free (CF) grammars, general counter machines cannot accept palindromes \cite{fischer1968counter}. However, they can accept non-CF languages like $a^nb^nc^nd^n$ \cite{fischer1968counter}. Thus, $\CL$ does not fall neatly into the classical Chomsky hierarchy.

\subsection{Restricted Counter Machines}

Now, we can can consider various restrictions of the general counter machine, and the corresponding classes of languages acceptable by such automata.

First, we present the \textit{simplified counter machine} \cite{weiss2018}. The counter update function in the simplified counter machine has two important constraints compared to the general machine. First, it can only be conditioned by the input symbol at each time step. Second, it can only increment or decrement its counters instead of being able to add or subtract arbitrary constants.

\begin{definition}[Simplified counter machine] \label{def:scm}
A counter machine is simplified if $u$ has the form
\begin{equation*}
    u : \Sigma \rightarrow \{ -1, \; +0, \; +1, \; \times 0 \}^k .
\end{equation*} 
\end{definition}

Another variant that we consider is the \textit{incremental counter machine}. The arguments to the update function of this machine are not restricted, but the additive operations are constrained to ${\pm}1$.

\begin{definition}[Incremental counter machine] \label{def:icm}
An counter machine is incremental if $u$ has the form
\begin{equation*}
    u : \Sigma \times Q \times \{0, 1\}^k \rightarrow \{ -1, \; +0, \; +1, \; \times 0 \}^k .
\end{equation*}
\end{definition}

Finally, we define a \textit{stateless} variant of the counter machine. Removing state from the counter machine is equivalent to allowing it to only have one state $q_0$.

\begin{definition}[Stateless counter machine] \label{def:qcm}
A counter machine is stateless if $Q=\{q_0\}$.
\end{definition}

\subsection{Saturated LSTMs}

The LSTM is a recurrent neural network resembling a counter machine. At each step, a vector encoding of the input $\bv x_t$ is used to update the state vectors $\bv c_t, \bv h_t$ and produce an acceptance decision $y_t$. Let $\thresh$ denote the function that returns $1$ for positive reals and $0$ otherwise. Similarly, let $\sgn$ return $1$ for positive reals and $-1$ otherwise. Let $\odot$ be elementwise multiplication over vectors. The saturated LSTM's recurrent update \cite{merrill-2019-sequential}, parameterized by weight tensors $\bm W$ and $\bv b$, is:

\begin{align}
    \bv f_t &= \thresh(\bm W^f \bv x_t + \bm U^f \bv h_{t-1}) \\
    \bv i_t &= \thresh(\bm W^i \bv x_t + \bm U^i \bv h_{t-1}) \\
    \bv o_t &= \thresh(\bm W^o \bv x_t + \bm U^o \bv h_{t-1}) \\
    \bv{\tilde c_t} &= \sgn(\bm W^c \bv x_t + \bm U^c \bv h_{t-1}) \\
    \bv c_t &= \bv f_t \odot \bv c_{t-1} + \bv i_t \odot \bv{\tilde c_t} \\
    \bv h_t &= \bv o_t \odot \bv c_t \label{eq:nonlin} \\
    y_t &= \thresh(\bv w^y \cdot \bv h_t + b^y) .
\end{align}

\noindent We say the LSTM accepts iff $y_t = 1$. In practice, \eqref{eq:nonlin} is often $\bv o_t \odot \tanh(\bv c_t)$. We remove the $\tanh$ for clarity, as its monotonicity does not change the expressiveness of the saturated network. These equations specify a discrete automaton that is highly similar to a counter machine \cite{weiss2018}\cite{merrill-2019-sequential}. 

The major difference between the saturated LSTM and the classical counter machines is that the LSTM partitions the counter values by passing them through a linear map and applying a thresholding function, whereas the classical counter machines probes whether or not the counters are zero. For example, for a counter $c$, the saturated LSTM could test $c \leq 5$, whereas the general counter machine can only test $c = 0$. Motivated by this, we define the \textit{threshold counter machine}, which views its counters by thresholding them instead of testing equality to $0$.

\begin{definition}[Threshold counter machine]
A threshold counter machine is a general counter machine where all occurrences of the zero-check function $z$ are redefined as predicates of the form $\lambda m. c \leq m$ for $m \in \mathbb{Z}$. We refer to such a function by the shorthand ${\leq}m$.
\end{definition}

\section{Counter Language Hierarchy} \label{sec:counter-hierarchy}

\subsection{Simplified Counter Languages}

Our first result relating counter classes is to show that the simplified counter languages are a proper subset of the general counter languages. The weakness of the simplified machine is that the update function is conditioned only by the input symbol. Thus, languages like $a^mb^{2m}$, which require switching counting behavior, cannot be decided correctly. We formalize this in \autoref{thm:scl-weakness}.

\begin{theorem}[Weakness of $\SCL$] \label{thm:scl-weakness}
Let $\SCL$ be the set of languages acceptable in real time by a simplified counter machine. Then $\SCL \subsetneq \CL$.
\end{theorem}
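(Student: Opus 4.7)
The plan is twofold. The inclusion $\SCL \subseteq \CL$ is immediate because a simplified update $u : \Sigma \to \{-1,+0,+1,\times 0\}^k$ is just a general update that ignores its $Q$ and $\{0,1\}^k$ arguments and whose additive outputs lie in $\{-1,+0,+1\}$, so every simplified counter machine is already a general counter machine. For strict containment I will exhibit the witness language $L = \{a^m b^{2m} \mid m \geq 1\}$. To place $L \in \CL$ I give a two-state, one-counter construction with a trap: in the initial state, $a$ adds $+2$ to the counter; the first $b$ transitions to a second state and begins $-1$ updates; any $a$ after a $b$ traps; acceptance requires being in the second state with counter zero. The $+2$ update is precisely the ability the general machine has that the simplified machine lacks.

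The hard direction is $L \notin \SCL$. Suppose for contradiction that $M$ is an SCM recognizing $L$ with state set $Q$. Because $u$ depends only on the input symbol, the counter update on every $a$ is some fixed $\alpha \in \{-1,+0,+1,\times 0\}^k$ and on every $b$ is some fixed $\beta \in \{-1,+0,+1,\times 0\}^k$. A short case analysis on $(\alpha_i, \beta_i)$ shows that after reading $a^m b^j$, the value of counter $i$ is either constant in $(m,j)$, identically zero from the first $b$ onward, or of the form $v_i^a(m) + j \beta_i$ with $v_i^a(m) \in \{0,+m,-m\}$ and $\beta_i \in \{-1,0,+1\}$. In each case $z(c_i)$ can flip during the $b$-scan only at $j=0$, at $j=m$, or never, and therefore the full zero-check vector assumes a single pattern $P_1$ on the whole range $j \in \{1,\dots,m-1\} \cup \{m+1,m+2,\dots\}$, with a possibly different pattern $P_m$ at the isolated index $j=m$.

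The finish is a finite-automaton pumping argument on this eventually-constant dynamics. Define $f : Q \to Q$ by $f(q) = \delta(b, q, P_1)$ and let $r$ be the state of $M$ after $a^m b^{m+1}$; then the state after $a^m b^{m+1+t}$ is $f^t(r)$ for every $t \geq 0$. Since $Q$ is finite, the orbit of $r$ under $f$ enters a cycle of some period $p_f \geq 1$ after a tail of length at most $\lvert Q \rvert$. I will choose $m$ so large that both $m-1 \geq \lvert Q \rvert$ and $p_f < m$; then $f^{m-1}(r)$ and $f^{m-1+p_f}(r)$ are the same cycle point, so the configurations of $M$ after $a^m b^{2m}$ and after $a^m b^{2m+p_f}$ coincide in both state and zero-check vector (the latter being $P_1$ in both cases by the previous paragraph). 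Hence $M$ accepts $a^m b^{2m}$ iff it accepts $a^m b^{2m+p_f}$, contradicting that exactly one of them lies in $L$. The main obstacle will be the per-counter bookkeeping in the middle paragraph; once one verifies that the zero-check vector really collapses to the $\{P_1, P_m, P_1\}$ structure across all $(\alpha_i, \beta_i)$ types simultaneously, the pumping step is entirely routine.
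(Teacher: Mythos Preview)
Your argument is correct and uses the same witness $a^{m}b^{2m}$ as the paper, with the same core observation that a simplified counter's zero-check during the $b$-suffix can only single out the index $j=m$. The executions differ in how they finish. The paper dispatches $\times 0$ by a one-line WLOG remark, then enumerates the nine possible counter functions $c = m u_a + l u_b$ with $u_a,u_b\in\{-1,0,1\}$ and observes that none of their zero-checks test $2m=l$; the finite-state component is handled by the single sentence ``tracking the ratio between $a$'s and $b$'s requires infinite state.'' You instead keep $\times 0$ in the case analysis, establish that the full zero-check vector is constant on $\{1,\dots,m-1\}\cup\{m+1,\dots\}$, and replace the hand-wave about finite state with an explicit pumping argument on the eventually $f$-periodic state sequence. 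Your route is longer but closes both gaps the paper's sketch leaves open. One cosmetic point: you write ``choose $m$ so large that $p_f < m$,'' which reads circularly since $p_f$ depends on $r$ and hence on $m$; this is harmless because every $f$-cycle has period at most $|Q|$, so $m>|Q|$ already suffices, but you may want to say that directly.
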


\begin{proof}
Consider the language $a^mb^{2m}$. This is trivially acceptable by a 1-counter machine that adds 2 for each $a$ and subtracts 1 for each $b$. On the other hand, we shall show that it cannot be accepted by any simplified machine. Assume by way of contradiction that such a simplified machine $M$ exists. We assume without loss of generality that $M$ does not apply a ${\times}0$ update, as doing so would erase all information about the prefix.

Tracking the ratio between $a$'s and $b$'s requires infinite state. Thus, the counters of $M$, as opposed to the finite state, must encode whether $2m = l$ for strings of the form $a^mb^l$. Let $c$ be the value of some counter in $M$. We can decompose $c$ into the update contributed by $a$'s and the the update contributed by $b$'s:
\begin{align}
    c &= mu_a + lu_b , \\
    u_a, u_b &\in \{-1, 0, 1\} .
\end{align}

Exhausting all the possible functions that $c$ can compute, we get
\begin{align}
    c &\in \{ 0, \pm m, \pm l, \pm (m + l), \pm (m - l) \} \\
    z(c) &\in \{0, \mathbbm{1}_{m>0}, \mathbbm{1}_{l>0}, \mathbbm{1}_{m+l>0}, \mathbbm{1}_{m-l \neq 0} \} .
\end{align}
We ignore the first four options for $z(c)$, as they do not relate $m$ to $l$. The final option tests $m/l = 1$, not $2$. Thus, $z(c)$ cannot test whether $2m = l$.
\end{proof}

Note that this argument breaks down if the counter update can depend on the state. In that case, we can build a machine that has two counters and two states: $q_0$ adds 1 to the first counter while it reads $a$, and then decrements the first counter and increments the second counter when it reads $b$. When the first counter is empty and the second counter is not empty, $q_0$ transitions to $q_1$, which decrements the second counter. We accept iff both counters are $0$ after $x_n$.

\subsection{Incremental Counter Languages}

Unlike the simplified counter machine, the incremental machine has the same linguistic capacity as the general machine. We can simulate each counter on a general machine with a finite amount of overhead. This provides a reduction from general to incremental machines.

\begin{theorem}[Generality of $\ICL$] \label{thm:generality-icl}
Let $\ICL$ be the set of languages acceptable in real time by an incremental counter machine. Then $\ICL = \CL$.
\end{theorem}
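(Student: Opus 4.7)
The containment $\ICL \subseteq \CL$ is immediate from the definitions, so the plan is to establish $\CL \subseteq \ICL$. Given a general $k$-counter machine $G$, its update function $u$ has a finite domain, so its image is a finite subset of $\{+m : m \in \mathbb{Z}\} \cup \{\times 0\}$, and there is a well-defined maximum magnitude $M$ of any additive constant occurring in that image. I would build an incremental $kM$-counter machine $G'$ that simulates $G$ step-for-step in real time by storing each general counter redundantly across $M$ incremental copies and pushing a bounded offset into the finite state.

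Concretely, for each general counter $c_i$, the simulator $G'$ would keep incremental counters $d_{i,1}, \ldots, d_{i,M}$ together with a state component $s_i \in \{-(M-1), \ldots, M-1\}$, under the invariant $d_{i,1} = \cdots = d_{i,M} = a_i$ and $c_i = M a_i + s_i$. The first thing I would verify is that this invariant allows $G'$ to reconstruct $z(c_i)$ from $z(d_{i,1})$ and $s_i$: since $|s_i| < M$, the equality $c_i = 0$ forces $a_i = 0$ and $s_i = 0$. With the zero-check vector $z(\bv c)$ reconstructible from $(z(\bv d), \bv s)$, $G'$ can internally evaluate $G$'s own $\delta$ and $u$ at every step in order to decide what to do.

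The central step is showing the invariant is preserved by a single incremental transition. When $G$ applies $+m$ to $c_i$ with $|m| \leq M$, $G'$ must pick a single value $\Delta_i \in \{-1, +0, +1\}$ and broadcast it to all $M$ copies $d_{i,1}, \ldots, d_{i,M}$, while updating the state offset to $s'_i := s_i + m - M\Delta_i$ so that $M a'_i + s'_i = M a_i + s_i + m$. Because $s_i + m$ lies in $\{-(2M-1), \ldots, 2M-1\}$, a three-case analysis shows that at least one of $\Delta_i \in \{-1, 0, +1\}$ returns $s'_i$ to the required window $\{-(M-1), \ldots, M-1\}$, preserving both the equality and the boundedness of the offset. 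The $\times 0$ case is handled by broadcasting $\times 0$ across all $M$ copies and resetting $s_i := 0$.

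The rest is bookkeeping: bundle these choices into a legal update function $u' : \Sigma \times Q' \times \{0,1\}^{kM} \to \{-1, +0, +1, \times 0\}^{kM}$ and transition function $\delta'$, take the expanded state space $Q' = Q \times \{-(M-1), \ldots, M-1\}^k$, and lift $F$ to $F'$ by translating through the zero-check reconstruction. The only potential obstacle I anticipate is the case check for $\Delta_i$, but this is essentially a remark about balanced base-$M$ remainders rather than a substantive argument, so the construction should go through without technical difficulty.
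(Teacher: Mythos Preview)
Your proposal is correct and is essentially the paper's own ring-counter construction: encode each general counter as a quotient held in an incremental counter plus a bounded offset held in finite state, then carry by $\pm 1$ when the offset leaves its window. The only deviation is that your $M$ equal copies $d_{i,1},\ldots,d_{i,M}$ are redundant---since they always hold the same value $a_i$, a single incremental counter per general counter suffices, which is exactly what the paper uses (with the standard remainder $q\in\{0,\ldots,d-1\}$ in place of your balanced offset $s_i$).
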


\begin{proof}
Let $d$ be the maximum that is ever added or subtracted from a counter $c$ in $M$. We simulate $c$ in $M'$ using a counter $c'$ and a value $q \in \mathbb{Z} \mod d$ encoded in finite state. We will implement a ``ring-counter" encoding of $c$ such that
\begin{align*}
    c' &= \floor{c / d} \\
    q &= c \mod d .
\end{align*}

To simulate a $\times 0$ update on $c$, we apply $\times 0$ to $c'$, and transition state such that $q := 0$. To simulate a $+m$ update on $c$ for some $m \in \mathbb{Z}$, we first change state such that $q := (q + m) \mod d$. Next, we apply the following update to $c'$:
\begin{equation}
    \begin{cases}
    +1 & \textrm{if} \; q+m \geq d \\
    -1 & \textrm{if} \; q+m < 0 \\
    +0 & \textrm{otherwise.}
    \end{cases}
\end{equation}

\noindent We can compute $z(c)$ by checking whether $z(c') = 0$ and $q=0$.
\end{proof}

\subsection{Stateless Counter Languages}

Similarly, restricting a counter machine to be stateless does not weaken its expressive capacity. We show how to reduce an arbitrary stateful machine to a stateless machine that has been augmented with additional counters. The key idea here is that we can use the additional counters as a one-hot vector that tracks the state of the original machine.

\begin{theorem}[Generality of $\nQCL$] \label{thm:nqcl-generality}
Let $\nQCL$ be the set of languages acceptable in real time by a stateless counter machine. Then $\nQCL = \CL$.
\end{theorem}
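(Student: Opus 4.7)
The plan is to start from an arbitrary general counter machine $M = \langle \Sigma, Q, q_0, u, \delta, F \rangle$ with $k$ counters and build a stateless machine $M'$ with $k + |Q|$ counters accepting the same language. The first $k$ counters of $M'$ mirror the originals, while the remaining $|Q|$ counters, indexed $s_q$ for $q \in Q$, implement a one-hot encoding of $M$'s current state: the invariant to maintain is that $s_q$ is positive iff $M$ is in state $q$. Since $M'$'s update function inspects the zero-check pattern of every counter, it can recover the simulated state from the state-counter portion of the zero-check vector and thereby mimic $M$'s state-dependent update $u$ and transition $\delta$.

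Concretely, on input $x_t$ with combined zero-check vector $(\bv z_c, \bv z_s)$, let $q \in Q$ be the state indicated by $\bv z_s$ (the unique index at which $\bv z_s$ is $1$). For the original counters, $M'$ applies $u(x_t, q, \bv z_c)$ componentwise, exactly as $M$ does. For the state counters, setting $q' = \delta(x_t, q, \bv z_c)$, $M'$ applies $+1$ to $s_{q'}$ and $\times 0$ to every other $s_{q''}$. This preserves the invariant: $s_{q'}$ ends up positive (non-negative plus one), and every other state counter is reset to zero. The acceptance mask is $F' = \{(q_0, (\bv z_c, \bv z_s)) : (q, \bv z_c) \in F \text{ for the } q \text{ indicated by } \bv z_s\}$, so only configurations whose one-hot portion encodes an $F$-accepting state of $M$ are accepted.

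The one wrinkle, which I expect to be the main obstacle to state cleanly rather than prove, is the initial configuration: all counters of $M'$ start at zero, so $\bv z_s = \bv 0$ does not yet satisfy the one-hot invariant. I would handle this by convention, treating the all-zero state-counter pattern as indicating state $q_0$, both in the update rule above and in $F'$; this is consistent because the initial configuration of $M'$ then maps to $\langle q_0, \bv 0 \rangle$ of $M$, handling even the empty-string case. After the very first step, the update rule sets $s_{q_1} = 1$ for $q_1 = \delta(x_1, q_0, \bv 0)$ while leaving the other state counters at zero, so the one-hot invariant holds from step $1$ onward. A routine induction on input length then shows that $M$ and $M'$ reach corresponding configurations at every step, proving $\CL \subseteq \nQCL$; the reverse inclusion is immediate, yielding $\nQCL = \CL$.
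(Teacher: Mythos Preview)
Your proposal is correct and follows essentially the same approach as the paper: both simulate the finite state of $M$ with $|Q|$ extra counters serving as a one-hot encoding, treat the all-zero state-counter pattern as encoding $q_0$, and read off the simulated state from the zero-check vector to drive $u$ and $\delta$. The only cosmetic difference is that the paper updates the state counters additively via $-\bv\omega(i)+\bv\omega(j)$ (keeping them exactly $0$/$1$), whereas you reset the inactive ones with $\times 0$ and increment the active one; both preserve the relevant zero-check invariant.
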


\begin{proof}
We define a new stateless machine $M'$ to simulate $M$ by adding a $|Q|$-length vector of counters called $\bv q'$. Let $\bv \omega(i)$ denote the $\abs{Q}$-length one-hot vector encoding $i$, i.e. $[\bv \omega(i)]_i = 1$, and all other indices are $0$. We consider $\bv \omega(0) = \bv 0$.

At initialization, $\bv q'$ encodes the initial state since $\bv q' = \bv 0 = \bv \omega(0)$. Furthermore, we define the invariant that, at any given time, $\bv q' = \bv \omega(i)$ for some state $i$. Thus, the additional counters now encode the current state.

Let $\bv x \Vert \bv y$ denote the concatenation of vectors $\bv x$ and $\bv y$. We define the new acceptance mask in $M'$ as
\begin{equation} \label{eq:stateless-f}
    F' = \{ \langle q_0, \bv b \Vert \bv \omega(i) \rangle \mid \langle q_i, \bv b \rangle \in F \} .
\end{equation}

\noindent We can update the counters inherited from $M$ analogously to \eqref{eq:stateless-f}. The last step is to properly update the state counters $\bv q'$. For each transition $\delta(x_t, q_i, \bv b) = q_j$ in $M$, we update $\bv q'$ by adding $-\bv \omega(i) + \bv \omega(j)$. This ensures $\bv q'$ is correct since
\begin{equation}
\bv \omega(i) + \big( -\bv \omega(i) + \bv \omega(j) \big) = \bv \omega(j) .
\end{equation}
\end{proof}

\subsection{Threshold Counter Languages}

We show that the threshold counter languages are equivalent to the general counter languages. As thresholding is a key capability of the saturated LSTM formalism, this suggests that much of the LSTM capacity falls within the general counter languages, although it does not provably establish containment.

\begin{theorem}[Generality of $\TCL$] \label{thm:generality-tcl}
Let $\TCL$ be the languages acceptable in real time by a threshold counter machine. Then $\TCL = \CL$.
\end{theorem}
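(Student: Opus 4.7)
The plan is to show both containments $\CL \subseteq \TCL$ and $\TCL \subseteq \CL$.

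For the easy direction $\CL \subseteq \TCL$, I would simulate a general counter machine by a threshold machine as follows: replace each counter $c$ with two threshold counters maintained to equal $c$ and $-c$ via mirrored updates, and set both of their thresholds to $\leq 0$. Since $c = 0$ iff $c \leq 0$ and $-c \leq 0$, the two threshold bits jointly recover $z(c)$, which lets the threshold machine mimic the original transitions, updates, and acceptance condition.

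For $\TCL \subseteq \CL$, the plan is to reduce in two stages. First, reduce a threshold machine to an \emph{incremental} threshold machine (one whose updates are restricted to $\pm 1$ and $\times 0$) using a ring-counter encoding in the spirit of Theorem 3: represent each counter $c_i$ as an incremental counter $c_i' = \lfloor c_i/d \rfloor$ together with a residue $q_i = c_i \bmod d$ stored in finite state, where $d$ is the maximum update magnitude. The original threshold $c_i \leq m_i$ translates to $c_i' \leq \lfloor (m_i - q_i)/d \rfloor$, whose right-hand side depends on the state $q_i$; to express this with the fixed-threshold primitive of a threshold machine, I would introduce $d$ parallel synchronized copies of $c_i'$, one per possible residue, each carrying a different fixed threshold, and have the transition function read the copy indexed by the current $q_i$. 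Second, reduce the incremental threshold machine to a general counter machine by tracking each counter $c$ (threshold $m$) with two general counters $a = \max(c - m, 0)$ and $b = \max(m - c, 0)$; then $c \leq m$ iff $a = 0$, which is a zero-check. Because updates are $\pm 1$, the new values of $a$ and $b$ under any move can be computed purely from the zero-checks of $a$ and $b$ (for instance, a $+1$ on $c$ increments $a$ when $a > 0$ or $a = b = 0$, and decrements $b$ when $b > 0$).

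The hardest part will be correctly handling $\times 0$ resets and the initial configuration: after $\times 0$ on $c$, the invariant demands $a = \max(-m, 0)$ and $b = \max(m, 0)$, but the general machine cannot assign a specific nonzero value to a counter in a single step. I would address this with a finite-state ``post-reset'' flag per counter; while the flag is set, the simulator reads a hardcoded threshold bit (the static predicate $0 \leq m$) in place of $a$, and on the next non-$\times 0$ update the counters $a, b$ are initialized by adding the fixed offsets appropriate to the known $\pm 1$ move. The same offset trick handles the initial configuration when $c = 0$ but $m \neq 0$.
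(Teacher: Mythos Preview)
Your argument is correct, but the route for $\TCL \subseteq \CL$ is genuinely different from the paper's. For the easy direction the two are essentially equivalent: the paper recovers $c=0$ by combining the checks ${\leq}0$ and ${\leq}{-}1$ on (copies of) a single counter, while you mirror $c$ to $-c$ and apply ${\leq}0$ to both. For the hard direction the paper works in a single pass: assuming without loss of generality one threshold $m>0$ per counter, it ring-counters modulo $m$ itself, keeping $c'=\lfloor c/m\rfloor$ as the new counter and $q=c\bmod m$ in finite state, and additionally tracks the sign of $c$ in state; the predicate $c\leq m$ is then read off from the sign bit together with $z(c')$. Your two-stage plan---first normalize to incremental updates via a ring-counter modulo the maximum update $d$ (with $d$ synchronized copies per counter to absorb the residue-dependent threshold shift), then simulate each incremental threshold counter by the pair $a=\max(c-m,0)$, $b=\max(m-c,0)$ under zero-checks---is more elaborate but also more explicit about $\times 0$ resets and the initial configuration, which the paper's sketch leaves implicit. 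The paper's construction buys brevity; yours cleanly separates the ``large additive update'' issue from the ``threshold versus zero-check'' issue and makes the reset handling transparent.
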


\begin{proof}
Given the ability to check ${\leq}m$ on the counters for any $m$, we can simulate ${=}0$ by checking both ${\leq}{-}1$ and ${\leq}0$. Thus, $\CL \subseteq \TCL$. To prove the other direction, we show how to simulate applying ${\leq}m$ to the counters using only ${=}0$.

Assume without loss of generality that only one threshold check $m$ applies to each counter $c$ (we can create copies of a counter and distribute the threshold checks over them if this is not the case), and that $m > 0$. We implement a ring-counter construction similar to the one used in \autoref{thm:generality-icl}, representing $c$ with a new counter $c' = \floor{c / m}$ and finite-state component $q = c \mod m$. We also store the sign of $c$ in finite state by recording whenever both $c'$ and $q$ pass zero. Having all this information, we conclude $c \leq m$ iff the sign is negative or $c' = 0$.
\end{proof}

The construction in \autoref{thm:generality-tcl} can be directly adapted to show that a general counter machine can simulate checking ${=}m$ in addition to ${=}0$.

\subsection{Summary}

The general counter machine, incremental counter machine, stateless counter machine, and threshold counter machine all converge to the same linguistic capacity, which we call $\CL$. The simplified counter machine \cite{weiss2018}, however, has a linguistic capacity $\SCL$ that is strictly weaker than $\CL$.
\section{Closure Properties} \label{sec:closure-properties}

Another way to understand the counter languages is through their closure properties. It turns out that the real-time counter languages are closed under a wide array of common operations, including complement, intersection, union, set difference, and symmetric set difference. The general result in \autoref{thm:binary_set_operation_closure} implies these closure properties, as well as many others.

\begin{theorem}[General set operation closure] \label{thm:binary_set_operation_closure}
Let $P$ be an $m$-ary operation over languages. If there exists an $m$-ary boolean function $p$ such that
\begin{equation*}
\mathbbm{1}_{P(L_1, .., L_m)}(x) = p \big( \mathbbm{1}_{L_1}(x), .., \mathbbm{1}_{L_m}(x) \big) ,
\end{equation*}
\noindent then $\CL$ and $\SCL$ are both closed under $P$.
\end{theorem}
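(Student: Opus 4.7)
The plan is to prove this by a standard product (parallel simulation) construction. Since $p$ determines membership in $P(L_1,\dots,L_m)$ from the membership bits of each $L_i$, it suffices to build one counter machine that simulates machines $M_1,\dots,M_m$ for each $L_i$ in lockstep and then applies $p$ at the acceptance mask. The same construction will be shown to work for both $\CL$ and $\SCL$, with the $\SCL$ case requiring only a short verification that no restriction on $u$ is violated.

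In detail, for each $i$, fix a real-time counter machine $M_i = \langle \Sigma, Q_i, q_0^i, u_i, \delta_i, F_i\rangle$ with $k_i$ counters accepting $L_i$. Define the product machine $M$ as follows. The state set is $Q = Q_1 \times \cdots \times Q_m$ with initial state $(q_0^1,\dots,q_0^m)$. The counter vector is the concatenation of the $k = k_1 + \cdots + k_m$ counters, so a zero-check $z(\bv c)$ decomposes into the concatenation $z(\bv c_1)\Vert\cdots\Vert z(\bv c_m)$ of the individual zero-checks. Given input symbol $x_t$, state $(q^1,\dots,q^m)$, and counter mask $\bv z_1\Vert\cdots\Vert\bv z_m$, define the transition and update by running each $M_i$ independently:
\begin{align*}
\delta\big(x_t,(q^1,\dots,q^m),\bv z_1\Vert\cdots\Vert\bv z_m\big) &= \big(\delta_1(x_t,q^1,\bv z_1),\dots,\delta_m(x_t,q^m,\bv z_m)\big), \\
u\big(x_t,(q^1,\dots,q^m),\bv z_1\Vert\cdots\Vert\bv z_m\big) &= u_1(x_t,q^1,\bv z_1)\Vert\cdots\Vert u_m(x_t,q^m,\bv z_m).
\end{align*}
Finally, the acceptance mask is
\begin{equation*}
F = \big\{ \big((q^1,\dots,q^m), \bv b_1\Vert\cdots\Vert\bv b_m\big) \;\big|\; p\big(\mathbbm{1}_{F_1}(q^1,\bv b_1),\dots,\mathbbm{1}_{F_m}(q^m,\bv b_m)\big) = 1 \big\}.
\end{equation*}
A straightforward induction on input length shows that the $i$th block of $M$'s configuration after reading $x_{1..t}$ equals the configuration of $M_i$ on the same prefix; hence $M$ accepts $x$ iff $p(\mathbbm{1}_{L_1}(x),\dots,\mathbbm{1}_{L_m}(x)) = 1$, i.e.\ iff $x \in P(L_1,\dots,L_m)$.

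The only nontrivial point is to check that the construction stays inside $\SCL$ when each $M_i$ is simplified. Here each $u_i$ has signature $u_i : \Sigma \to \{-1,+0,+1,\times 0\}^{k_i}$, so the concatenated update depends only on $x_t$ and takes values in $\{-1,+0,+1,\times 0\}^k$; the product $u$ therefore satisfies \autoref{def:scm}. The state transition and acceptance mask are unrestricted in the simplified model, so no issue arises there. I do not expect a real obstacle: the product construction for counter machines is essentially the same as the classical product construction for finite automata, and the main thing to be careful about is merely bookkeeping the counter blocks and confirming that the simplified-update signature is preserved by concatenation.
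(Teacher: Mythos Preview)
Your proposal is correct and follows essentially the same approach as the paper: build a product machine that simulates $M_1,\dots,M_m$ in parallel and accept according to $p$. The paper's proof is a two-sentence sketch of exactly this idea, while you spell out the product construction and the $\SCL$-preservation check in detail.
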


\begin{proof}
First, we construct counter machines $M_1, .., M_m$ that decide the counter languages $L_1, .., L_m$. We define a new machine $M'$ that, on input $x$, simulates $M_1, .., M_m$ in parallel, and accepts if and only if
\begin{equation}
    p(M_1(x), .., M_m(x)) = 1 .
\end{equation}
\end{proof}

\subsubsection{Corollaries.} Let $\Lambda$ be a placeholder for either $\CL$ or $\SCL$. Let $L_1, L_2 \in \Lambda$. By \autoref{thm:binary_set_operation_closure}, $\Lambda$ is closed under the following operations:
\begin{gather}
    \Sigma^* \setminus L_1 \\
    L_1 \cap L_2  \\
    L_1 \cup L_2  \\
    L_1 \setminus L_2  \\
    (L_1 \setminus L_2) \cup (L_2 \setminus L_1)  .
\end{gather}

\section{Compositional Expressions} \label{sec:expressions}

We now study the abilities of counter machines on the language $L_m$ (\autoref{def:ln}). Like natural language, $L_m$ has a deep structure consisting of recursively nested hierarchical constituents.

\begin{definition}[$L_m$ \cite{fischer1968counter}] \label{def:ln}
For any $m$, let $L_m$ be the language generated by:
\begin{verbatim}
<exp> -> <VALUE>
<exp> -> <UNARY> <exp>
<exp> -> <BINARY> <exp> <exp>
..
<exp> -> <m-ARY> <exp> .. <exp>
\end{verbatim}
\end{definition}

Surprisingly, even a $1$-counter machines can decide $L_m$ in real time by implementing \autoref{alg:decide} \cite{fischer1968counter}. \autoref{alg:decide} uses a counter to keep track of the depth at any given index. If the depth counter reaches $-1$ at the end of the string, the machine has verified that the string is well-formed. We define the arity of a \texttt{<VALUE>} as $0$, and the arity of an \texttt{<m-ARY>} operation as $m$.

\begin{algorithm}
	\caption{Deciding $L_m$ \cite{fischer1968counter}}
	\label{alg:decide}
	\begin{algorithmic}[1]
		\Procedure{Decide}{$x$}
		    \State $c \gets 0$
		    \ForEach{$x_t \in x$}
		            \State $c \gets c + \textsc{Arity}(x_t) - 1$
		    \EndFor
			\State \Return $c = -1$
		\EndProcedure
	\end{algorithmic}
\end{algorithm}

\subsection{Semantic Evaluation as Structure Sensitivity}

While \autoref{alg:decide} decides $L_m$, it is agnostic to the deep structure of the input in that it does not represent the hierarchical dependencies between tokens. This means that it could not be used to evaluate these expressions. Based on this observation, we prove that no counter machine can evaluate boolean expressions due to the deep structural sensitivity that semantic evaluation (as opposed to syntactic acceptance) requires. We view boolean evaluation as a simpler formal analogy to evaluating the compositional semantics of natural language.

To be more formal, consider an instance of $L_2$ with values $\{0, 1\}$ and binary operations $\{\wedge, \vee\}$. We assign the following semantics to the terminals:
\begin{align}
    \den{0} = 0 &\quad \den{1} = 1 \\
    \den{\wedge} &= \lambda pq. \; p \wedge q \\
    \den{\vee} &= \lambda pq. \; p \vee q .
\end{align}

Our semantics evaluates each nonterminal by applying the denotation of each syntactic argument to the semantic arguments of the operation. For example:
\begin{equation}
    \den{\vee 0 1} = \den{\vee}(\den{0}, \den{1}) = 0 \vee 1 = 1 .
\end{equation}
\noindent We also define semantics for non-constituent prefixes via function composition:
\begin{equation}
    \den{\vee \vee} = \den{\vee} \circ \den{\vee} = \lambda pqr. \; p \vee q \vee r .
\end{equation}

We define the language $B$ as the set of valid strings $x$ where $\den{x} = 1$.

\begin{theorem}[Weak evaluation] \label{thm:weak-evaluation}
$B \notin \CL$.
\end{theorem}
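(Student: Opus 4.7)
The plan is a fooling-set argument via configuration counting. Fix a hypothetical $k$-counter machine $M$ accepting $B$. Since each counter changes by a bounded amount per step, after reading a length-$(n-1)$ prefix the counters lie in $[-C(n-1), C(n-1)]$ for some constant $C$ depending on $M$, giving at most $|Q| \cdot (2C(n-1)+1)^k = O(n^k)$ reachable configurations. I want to exhibit $2^{n-1}$ prefixes that must pairwise drive $M$ to distinct configurations, contradicting this polynomial bound for large $n$.

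For each $o = (o_1, \ldots, o_{n-1}) \in \{\wedge, \vee\}^{n-1}$, take the prefix $P_o = o_1 o_2 \cdots o_{n-1}$ and the family of completions $E_{o,v} = P_o v_1 v_2 \cdots v_n$ for $v \in \{0,1\}^n$. Tracking the arity counter of \autoref{alg:decide} along $E_{o,v}$ gives the trajectory $0, 1, \ldots, n-1, n-2, \ldots, 0, -1$, so $E_{o,v}$ is well-formed. Prefix-notation parsing yields a left-leaning tree whose evaluation is the left-associative formula
\begin{equation*}
    f_o(v) = (\cdots((v_1 \; o_{n-1} \; v_2) \; o_{n-2} \; v_3) \cdots) \; o_1 \; v_n ,
\end{equation*}
so that $E_{o,v} \in B$ iff $f_o(v) = 1$.

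The semantic core of the argument is the claim that $o \mapsto f_o$ is injective. I would prove this by induction on $n$. Write $f_o(v_1, \ldots, v_n) = o_1 \big( g_o(v_1, \ldots, v_{n-1}), v_n \big)$, where $g_o$ is the analogous function built from $(o_2, \ldots, o_{n-1})$. Since $\wedge$ and $\vee$ are monotone, $g_o$ attains both Boolean values (at $v = 0^{n-1}$ and $v = 1^{n-1}$), so $o_1$ is identified by whether $f_o(\cdot, 0)$ is identically $0$ (forcing $o_1 = \wedge$) or $f_o(\cdot, 1)$ is identically $1$ (forcing $o_1 = \vee$). Given $o_1$, one recovers $g_o$ from $f_o$ by plugging in the appropriate boundary value of $v_n$, and the remaining $o_2, \ldots, o_{n-1}$ follow by the inductive hypothesis.

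To conclude: if $M$ reached the same configuration after $P_o$ and $P_{o'}$ for some $o \neq o'$, then by determinism $M$ would accept $E_{o,v}$ iff it accepts $E_{o',v}$ for every $v$, forcing $f_o = f_{o'}$ and contradicting injectivity. Hence the $2^{n-1}$ prefixes require $2^{n-1}$ pairwise distinct configurations, exceeding $O(n^k)$ for large $n$. The main obstacle is the injectivity lemma; the counting step is standard, but the structural claim that distinct operator sequences induce distinct $\wedge$-$\vee$ formulas depends on $g_o$ being non-constant, which in turn rests on the monotonicity of $\{\wedge, \vee\}$.
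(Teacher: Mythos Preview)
Your proposal is correct and follows essentially the same strategy as the paper: a configuration-counting argument showing that the $2^{n-1}$ operator prefixes $o_1\cdots o_{n-1}$ induce pairwise distinct boolean functions of the trailing values, which outstrips the $O(n^k)$ configurations available to a $k$-counter machine. The only cosmetic difference is in the injectivity step: the paper distinguishes $f_\wedge$ from $f_\vee$ via the right-to-left--minimal satisfying assignment, whereas you recover $o_1$ by testing whether $f_o(\cdot,0)\equiv 0$ or $f_o(\cdot,1)\equiv 1$ (using that $g_o$ is non-constant by monotonicity) and then peel off $g_o$; your version is arguably a bit more explicit, but the underlying induction is the same.
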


\begin{proof}
Assume by way of contradiction that there exists a counter machine deciding $B$. We consider an input $x$ that contains a prefix of $p$ operators followed by a suffix of $p+1$ values. For the machine to evaluate $x$ correctly, the configuration after $x_p$ must encode which boolean function $x_p$ specifies.

However, a counter machine with $k$ counters only has $O(p^k)$ configurations after reading $p$ characters. We show by induction over $p$ that an $p$-length prefix of operators can encode $\geq 2^p$ boolean functions. Since the machine does not have enough configurations to encode all the possibilities, we reach a contradiction.

\paragraph{Base Case.} With $p=0$, we have a null prefix followed by one value that determines $\den{x}$. We can represent exactly 1 ($2^0$) function, which is the identity.

\paragraph{Inductive Case.} The expression has a prefix of operators $x_{1:p+1}$ followed by values $x_{p+2:2p+3}$. We decompose the semantics of the full expression to

\begin{equation} \label{binaryComposition}
\den{x} = \den{x_1}(\den{x_{2:2p+2}}, \den{x_{2p+3}}) .
\end{equation}

Since $\den{x_{2:2p+2}}$ has a prefix of $p$ operators, we apply the inductive assumption to show it can represent $ \geq 2^p$ boolean functions. Define $f$ as the composition of $\den{x_1}$ with $\den{x_{2:2p+2}}$. There are two possible values for $f$: $f_\wedge$, obtained when $x_1 = \wedge$, and $f_\vee$, obtained when $x_1 = \vee$. We complete the proof by verifying that $f_\wedge$ and $f_\vee$ are necessarily different functions.

To do this, consider the minimal sequence of values that will satisfy them according to a right-to-left ordering of the sequences. For $f_\wedge$, this minimal sequence ends in $1$, whereas for $f_\vee$ it must end in a $0$. Therefore, $f$ represents at least two unique functions for each value of $\den{x_{2:2p+2}}$. Thus, a $p+1$-length sequence of prefixes can encode $\geq 2 \cdot 2^p = 2^{p+1}$ boolean functions.

\end{proof}

\autoref{thm:weak-evaluation} shows how counter machines cannot represent certain hierarchical dependencies, even when the generated language is within the counter machine's weak expressive capacity. This is analogous to how CFGs can weakly generate Dutch center embedding \cite{Pullum1980-PULNLA}, even though they cannot assign the correct cross-serial dependencies between subjects and verbs \cite{bresnan-1982-cross-serial}. Thus, while counter memory can track certain formal properties of compositional languages, it cannot represent the underlying hierarchical structure in a deep way.

\section{Semilinearity} \label{sec:semilinearity}

Semilinearity is a condition that has been proposed as a desired property for any formalism of natural language syntax \cite{joshi1990convergence}. Intuitively, semilinearity ensures that the set of string lengths in a language is not unnaturally sparse. Regular, context-free, and a variety of mildly context-sensitive languages are known to be semilinear \cite{joshi1990convergence}. The semilinearity of $\CL$ is an interesting open question for understanding the abilities of counter machines as grammars.

\subsection{Definition}

We first define semilinearity over sets of vectors before considering languages. To start, we introduce the notion of a linear set:

\begin{definition}[Linear set]
A set $S \subseteq \mathbb{N}^k$ is linear if there exist $\bm W \in \mathbb{N}^{k\times m}$ and $\bv b \in \mathbb{N}^k$ such that
\begin{equation*}
    S = \left\{ \bv n \in \mathbb{N}^m \mid \bm W \bv n + \bv b = \bv 0 \right\} .
\end{equation*}
\end{definition}

Semilinearity, then, is a weaker condition that specifies that a set is made up of a finite number of linear components:

\begin{definition}[Semilinear set]
A set $S \subseteq \mathbb{N}^k$ is semilinear if it is the finite union of linear sets.
\end{definition}

To apply this definition to a language $L$, we translate each string $x \in L$ into a vector by taking $\Psi(x)$, the Parikh mapping of $x$. The Parikh mapping of a sentence is its ``bag of tokens" representation. For example, the Parikh mapping of $abaa$ with respect to $\Sigma = \{a, b\}$ is $\langle 3, 1 \rangle$. We say that a language $L$ is semilinear if its image under $\Psi$, i.e. $\{\Psi(x) \mid x \in L\}$, is semilinear.

\subsection{Semilinearity of Counter Languages}

We do not prove that the general counter languages are semilinear, but we do prove it for a dramatically restricted subclass of the counter languages. Define $\nQSCL$ as the set of language acceptable by a counter machine that is \textit{both} simplified (\autoref{def:scm}) \textit{and} stateless (\autoref{def:qcm}). $\nQSCL$ is indeed semilinear.

\begin{theorem}[Semilinearity of $\nQSCL$] \label{thm:semilinearity}
For all $L \in \nQSCL$, $L$ is semilinear.
\end{theorem}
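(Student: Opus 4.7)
The plan is to show $\Psi(L)$ is a finite union of Presburger-definable sets, hence semilinear by the Ginsburg--Spanier theorem. The key structural observation is that in a simplified stateless machine every symbol $\sigma \in \Sigma$ induces a fixed per-counter update, so the final value of counter $i$ depends only on the suffix of $x$ following the last position whose symbol lies in $R_i := \{\sigma : u(\sigma)_i = {\times}0\}$. Writing $a_i(\sigma) \in \{-1, 0, +1\}$ for the additive update when $\sigma \notin R_i$, we have
\[
c_i(x) = \sum_{t = \tau_i + 1}^{\lvert x \rvert} a_i(x_t),
\]
where $\tau_i$ is the position of the last reset of counter $i$ in $x$ (or $0$ if none). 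Statelessness then reduces $x \in L$ to the purely counter-level condition $z(\bv c(x)) \in F_0$, where $F_0 := \{\bv b : \langle q_0, \bv b\rangle \in F\}$.

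Next I would partition the set of candidate Parikh images according to a finite \emph{reset pattern}. Fix $\bv r \in \prod_i (R_i \cup \{\bot\})$ together with a weak linear order on $S^* := \{i : r_i \neq \bot\}$ recording the order of the $\tau_i$ (with ties permitted when a single symbol in several $R_i$ witnesses the last reset of multiple counters). Each combinatorial type decomposes any consistent string into segments separated by the distinct reset positions; in the segment between the $l$th and $(l{+}1)$th distinct reset, no symbol of $\bigcup_{i \in B_1 \cup \cdots \cup B_l} R_i$ may appear (where $B_j$ is the block of counters tied at the $j$th reset), else some ``last reset'' would be postponed. Conversely, any nonnegative integer assignment of per-segment Parikh vectors meeting these exclusion constraints can be realized as a genuine string by concatenating the segments in order, interleaved with the prescribed separator symbols.

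Under a fixed type, every $c_i$ is an explicit integer-linear function of the segment-level counts (for $i \notin S^*$, of the whole-string Parikh image), so the acceptance disjunction $z(\bv c) \in F_0$ becomes a Boolean combination of linear equalities and disequalities. The total Parikh image is the linear sum of the segment Parikhs plus the fixed contribution from the separator symbols. Thus the Parikh images of accepted strings of this type form the projection onto the Parikh coordinates of a Presburger-definable subset of $\mathbb{N}^{(s+1)\lvert\Sigma\rvert}$, which is semilinear. Taking the finite union over all reset patterns, orderings, and acceptance vectors $\bv b \in F_0$ expresses $\Psi(L)$ as a finite union of semilinear sets, which is again semilinear.

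The main obstacle I expect is combinatorial bookkeeping: correctly enumerating the reset-pattern types while faithfully tracking ties (multiple counters sharing a last-reset position through a symbol in the intersection of their reset sets), and verifying the realization step, i.e.\ that every integer solution to the segment-level constraints corresponds to an actual word processed by $M$ in the intended way. Once these combinatorial details are settled, closure of semilinear sets under finite union and projection finishes the argument.
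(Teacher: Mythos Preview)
Your proposal is correct, but it works considerably harder than necessary. The paper avoids the global reset-pattern enumeration entirely by treating the counters \emph{independently}: it writes
\[
L \;=\; \bigcup_{\bv b \in F_0}\;\bigcap_{i=1}^{k}\;\{\,x : z(c_i(x)) = b_i\,\},
\]
and then shows each single-counter set $\{x : z(c_i(x)) = b_i\}$ is semilinear on its own. For one counter the ``last reset'' trick is trivial: such a set is (up to the no-reset case) just $\Sigma^* \cdot Z_i \cdot L_i$, where $Z_i$ is the finite set of reset symbols for counter $i$ and $L_i \subseteq (\Sigma\setminus Z_i)^*$ is cut out by one linear (dis)equality in the Parikh coordinates. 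Closure of semilinear languages under concatenation, finite intersection, and finite union then does all the work that your simultaneous segment decomposition does by hand.

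What this buys: the paper's argument is a few lines and never needs to enumerate orderings or tie-patterns among the $\tau_i$, nor to verify a realization lemma. What your approach buys: it is more explicit about the actual semilinear description of $\Psi(L)$, and it handles the $b_i=1$ (nonzero) case cleanly as a disequality, which the paper glosses over. Your anticipated bookkeeping obstacles (ties, the $r_i=\bot$ case requiring $R_i$-symbols to be excluded from \emph{all} segments, realization) are all genuine but routine; nothing in the plan would fail. If you want to streamline, simply replace the global segment decomposition by the per-counter concatenation $\Sigma^* \cdot Z_i \cdot L_i$ and invoke closure under intersection.
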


\begin{proof}
Applying the definition of counter machine acceptance, we express $L$ as
\begin{align}
L = \bigcup_{\bv b \in F} \{ x \; | \; \bv c(x) = \bv b \}
= \bigcup_{\bv b \in F} \bigcap_{i=1}^k \{ x \; | \; c_i(x) = b_i \}. 
\end{align}

Semilinear languages are closed under finite union and intersection, so we just need to show $\{ x \; | \; c_i(x) = b_i \}$ is semilinear. We apply the following trick:
\begin{equation}
    \{ x \; | \; c_i(x) = b_i \} = \Sigma^* \Vert Z_i \Vert L_i
\end{equation}
\noindent where $Z_i$ is the set of all tokens that set counter $i$ to 0, and $L_i$ is the set of suffixes after the last occurence of some token in $Z_i$. Since semilinear languages are closed under concatenation, and $\Sigma^*$ and the finite language $Z_i$ are trivially semilinear, we just need to show that $L_i$ is semilinear.
Counter $i$ cannot be set to zero on strings of $L_i$, so we can write
\begin{align}
b_i = c_i(x) = \sum_{t=1}^n u_i(x_t)
= \sum_{\sigma \in \Sigma} u_i(\sigma) \#_\sigma(x)
= \bv u_i \cdot \Psi(x) \label{eq:counter-sum}
\end{align}
\noindent where $\bv u_i$ denotes the vector of possible updates to counter $i$ where each index corresponds to a different $\sigma \in \Sigma$. So, $L_i$ is the linear language
\begin{equation}
    L_i = \{ x \in \Sigma^* \mid \bv u_i \cdot \Psi(x) - b_i = 0 \} .
\end{equation}
\end{proof}

Although the proof of \autoref{thm:semilinearity} is nontrivial, $\nQSCL$ is a weak class. Such languages have limited ability to even detect the relative order of tokens in a string. We hope the proof might be extended to show $\SCL$ or $\CL$ is semilinear.
\section{Conclusion} \label{sec:conclusion}

We have shown that many variants of the counter machine converge to express the same class of formal languages, which supports that $\CL$ is a robustly defined class. The variations we explored move the classical general counter machine \cite{fischer1968counter} closer to the LSTM in form without changing its expressive power. We also proved real-time counter languages are closed under a large number of common set operations, providing tools for future work investigating counter automata.

We also showed that counter automata are incapable of evaluating boolean expressions, even though they are capable of verifying that boolean expressions are syntactically well-formed. This result has a clear parallel in the domain of natural language: deciding whether a sentence is grammatical is different than building a sentence's correct compositional meaning. A general take-away from our results is that just because a counter machine (or LSTM) is sensitive to surface patterns in language does not mean it can build correct semantic representations. Counter memory can be exploited to weakly match patterns in linguistic data, which might provide the wrong kinds of inductive bias for achieving sophisticated natural language understanding.

Finally, we asked whether counter languages are semilinear as another way of studying their power. We concluded that a weak subclass of the counter languages are semilinear, and encourage future work to address the general case.

\section*{Acknowledgments}
Thanks to Dana Angluin, Robert Frank, Yiding Hao, Roy Schwartz, and Yoav Goldberg, as well as other members of Computational Linguistics at Yale and the Allen Institute for AI, for their suggestions on various versions of this work. Additional thanks to several anonymous reviewers for their exceptional feedback.

%

\bibliographystyle{splncs04}
\bibliography{main}

\end{document}